\newcommand{\bx}{\boldsymbol{x}}
\newcommand{\bdelta}{\boldsymbol{\delta}}
\newcommand{\by}{\boldsymbol{y}}
\newcommand{\bc}{\boldsymbol{c}}
\newcommand{\bb}{\boldsymbol{b}}
\newcommand{\bh}{\boldsymbol{h}}
\newcommand{\be}{\boldsymbol{e}}
\newcommand{\bu}{\boldsymbol{u}}
\newcommand{\bz}{\boldsymbol{z}}
\newcommand{\bp}{\boldsymbol{p}}
\newcommand{\bW}{\boldsymbol{W}}
\newcommand{\bL}{\boldsymbol{L}}
\newcommand{\bX}{\boldsymbol{X}}
\newcommand{\bE}{\boldsymbol{E}}
\newcommand{\bA}{\boldsymbol{A}}
\renewcommand{\b}[1]{\textbf{#1}}
\newcommand{\cX}{\mathcal{X}}
\newcommand{\cY}{\mathcal{Y}}
\newcommand{\f}{\mathtt{f}}
\newcommand{\g}{\mathtt{g}}
\renewcommand{\v}{\mathtt{v}}
\newcommand{\h}{\mathtt{h}}
\newcommand{\p}{\mathtt{phist}}
\newcommand{\bbR}{\mathbb{R}}
\renewcommand{\vec}[1]{\text{vec}\left(#1\right)}
\newcommand{\nullspace}[1]{\text{null}\left(#1\right)}
\newcommand*{\defeq}{\stackrel{\text{def}}{=}}
\newcommand{\indicate}[1]{{1}{\{#1\}}}
\newcommand{\hide}[1]{}
\title{Discriminative structural graph classification}
\author{%
  Younjoo Seo \\
  EPFL\\
  \texttt{youngjoo.seo@epfl.ch} \\
  \And
   Andreas Loukas \\
   EPFL \\
   \texttt{andreas.loukas@epfl.ch}
   \And
   Nathana\"el Perraudin \\
   SDSC \\
   \texttt{nathanael.perraudin@sdsc.ethz.ch} \\
}
\begin{document}

\renewcommand{\paragraph}[1]{\vspace{0mm}\noindent\textbf{#1}}
\newcommand{\Section}[1]{\vspace{-2mm}\section{#1}\vspace{-2mm}}
\newcommand{\Subsection}[1]{\vspace{-1mm}\subsection{#1}\vspace{-1mm}}

\newtheoremstyle{exampstyle}
  {1\topsep} 
  {0.2\topsep} 
  {} 
  {} 
  {\bfseries} 
  {.} 
  {.5em} 
  {} 

\newtheorem{theorem}{Theorem}[section]
\newtheorem{proposition}{Proposition}[section]
\newtheorem{corollary}{Corollary}[section]
\newtheorem{lemma}{Lemma}[section]
\newtheorem{claim}{Claim}[section]
\newtheorem{definition}{Definition}[section]

\maketitle

\begin{abstract}
This paper focuses on the discrimination capacity of aggregation functions: these are the permutation invariant functions used by graph neural networks to combine the features of nodes. Realizing that the most powerful aggregation functions suffer from a dimensionality curse, we consider a restricted setting. In particular, we show that the standard sum and a novel histogram-based function have the capacity to discriminate between any fixed number of inputs chosen by an adversary. Based on our insights, we design a graph neural network aiming, not to maximize discrimination capacity, but to learn discriminative graph representations that generalize well. Our empirical evaluation provides evidence that our choices can yield benefits to the problem of structural graph classification.
\end{abstract}

\Section{Introduction}

Neural networks tailored to the graph classification problem commonly rely on two permutation-invariant primitives (e.g., ~\citep{4700287,gilmer2017neural,hamilton2017inductive}): a) a local aggregation function that updates each node's representation by combining the features in its vicinity, and b) a global aggregation (also called pooling or readout) function that fuses all learned node embeddings to obtain a graph representation.

To attain the maximum discrimination power possible, both local and global aggregation functions should be injective. 
Motivated by this observation and inspired by work on deep sets~\citep{NIPS2017_6931}, it was recently shown that simple summation suffices to attain injectivity if the features have first been passed through a (possibly involved) function acting independently on each feature vector~\citep{xu2018powerful}.
This can serve as a theoretical justification for using sum aggregators preceded by a multi-layer perceptron (MLP)~\citep{4700287,duvenaud2015convolutional,xu2018powerful}. In addition, an equivalence was established between the capacity of graph neural networks (GNN) with injective aggregative functions and that of the  Weisfeiler-Lehman (WL) test for graph isomorphism~\citep{weisfeiler1968reduction,weisfeiler2006construction}.

Differently, we argue that maximizing the discrimination capacity of aggregation functions (and GNN) could be both unfeasible and undesirable in the context of learning. 
In fact, our first contribution is to derive a curse of dimensionality lower bound for aggregation functions confirming that 
even approximate injectivity necessitates an {exponential} dependence on the ambient space dimension. 
This implies that, as usual, there exists a trade-off between the capacity of learners to distinguish between distinct objects in the training set and their ability to generalize over unseen examples. Thus, as an overly simple aggregation will lack discrimination power, a network using injective aggregation functions will likely overfit the training set. 

Aiming for neither ends of the trade-off, we advocate for aggregation functions with \textit{bounded} discrimination capacity, i.e., that are capable of distinguishing between a fixed cardinality set of inputs. However, to ensure that there are no blindspots, this should hold for \textit{any} such set of inputs---even if the latter were chosen by an adversary.  
Our second contribution is to derive such bounded injectivity guarantees for the standard sum and a newly introduced histogram-based aggregation function, for the case when they are preceded by a universal approximator that acts independently on the features of each node. Though leading to fewer parameters, using a histogram is shown to be as discriminative as the sum.

For our third and final contribution, we design a GNN aiming not to maximize discrimination capacity, but to learn discriminative representations that generalize well. Taking a purist perspective, we consider the challenging structural classification setting, where one needs to learn graph representations without relying on node or edge attributes. 
Our GNN is built by hierarchically composing two jointly-optimized structural embedding networks: the first computes node representations, whereas the second combines the representations of a random subset of nodes to output a graph representation. Sampling helps us both to reduce the computation time as well as to combat overfitting. Moreover, employing histograms helps reduce the number of trainable parameters.

Our experiments provide evidence that our choices lead to structural embeddings that generalize well: our network attains competitive accuracy in structural graph classification over 9 benchmark datasets, often outperforming the 14 baselines (6 graph kernels and 8 GNNs) we compare it to. 
Interestingly, our network also frequently outperforms WL graph kernels~\citep{shervashidze2011weisfeiler}, reinforcing our claim that, in practice, the achievable capacity of graph neural networks (and graph kernels) can significantly deviate from theoretical limits constructed based on injectivity assumptions.

\Subsection{Our theoretical results in view of the literature}

A number of recent papers have studied theoretically permutation invariance and equivariance in the context of GNN: 
The first group of works focuses on the space of functions that act on graphs. The work by \cite{DBLP:journals/corr/abs-1812-09902} brought forth a characterization all invariant and equivariant linear layers and intriguingly demonstrated that their dimension is 2 and 15, respectively.  Moreover, \citet{maron2019universality} and \citet{keriven2019universal} derived a universal approximation theorem for (a specific class of one hidden layer) invariant and equivariant networks, respectively. 

The second group of works
instead of considering the space of all functions, 
study those functions that can be implemented by iteratively aggregating the values of neighbors of each node (or of all nodes in the pooling stage). Extending the work of~\citet{NIPS2017_6931}, \citet{xu2018powerful} demonstrated that the popular sum aggregation amounts to a universal approximator. Moreover, the same work provided a bound on the capacity of GNN w.r.t. graph classification proving that, when injective aggregation functions are employed, GNN are as discriminative as the Weisfeiler-Lehman test for graph isomorphism.

Similar to the second group, our theoretical results concern the capacity of aggregation functions. However, somewhat in contrast to what was previously shown, we put forth an exponential lower bound on the output size of any (even approximately) injective aggregation function. This provides evidence that the maximum capacity of GNN might be unattainable in practice. We then provide an alternative notion of capacity and show that it can be more pragmatically satisfied.

We should also mention the relevant result of~\cite{levie2019transferability}, who studied the stability of spectral graph convolution. By bounding the effect that changing the graph might have, this work indirectly poses a bound on discrimination capacity of a specific permutation equivariant layer. Placing this result in context of our theoretical framework could be an interesting further step.

\paragraph{Notation.} We use bold symbols to denote matrices and vectors, e.g., $\bX$ and $\bx$.  
We focus on weighted and possibly directed graphs $G = (V,E, w)$, where $V$ and $E$ are the node and edge sets, and $w_{ij}> 0$ only if there exist an edge $e_{ij} \in E$ between nodes $v_i$ and $v_j$.
Note that sets are distinguished from multisets (defined later) by using a calligraphic upper-case symbol: $\cX$ is a multiset and $X$ is a set. 
We also denote aggregation functions in a typewriter font (e.g., $\f$ and $\g$) in contrast to general functions (e.g., $f$ and $g$). 
We focus specifically on $d$-multisets: multisets whose elements $\bx \in \cX$ have the same dimension $ \bx \in \bbR^d$. 
We also assume that all vectors $\bx$ lie in some bounded domain $D \subset \bbR^d$.  

\Section{The discrimination capacity of aggregation functions}
\label{sec:aggregation_functions}

Suppose that we are interested in the features $\cX = \{ \bx_1, \cdots, \bx_n\}$ supported over a set of $n$ nodes---the latter might correspond to the neighborhood of a node or even an entire graph. 
Lacking a consistent method of ordering the elements of $\cX$ (i.e., one that translates across different neighborhoods or graphs) we treat it as a \textit{multiset}---a set with (possibly) repeating elements: 
\begin{definition}
A multiset $\cX$ is a 2-tuple $(X, \mathsf{c})$, where $X$ is a base set and $\mathsf{c}\colon X \to \mathbb{N}_{\geq 1}$ is a function 
that counts the multiplicity
of each element $\bx$ from $X$ in $\cX$. 
\end{definition}

The inner workings of a number of learning algorithms for graphs can be interpreted as an application of a (possibly involved and parametrized) sequence of functions over multisets. 
We call each such operation an \textit{aggregation function}, because it involves the aggregation of information (features or hidden representations) over a multiset:
\begin{definition}
An {aggregation function} is a map from the set of multisets onto some vector space. 
\end{definition}

It can be immediately realized that every aggregation function must be \textit{invariant to permutation}.

As aptly reviewed by~\citep{gilmer2017neural}, aggregation functions are commonly employed to obtain a vector representation of the information in the neighborhood of a node or an entire graph (also referred to as global pooling or readout function): 
For instance, a number of variants of local aggregation can be built combining a sum with different types of linear layers (e.g., Laplacian eigenvector specific~\citep{bruna2013spectral,defferrard2016convolutional},  degree normalized~\citep{kipf2016semi}, degree dependent~\citep{duvenaud2015convolutional}, edge-label dependent~\citep{li2015gated}) or MLPs~\citep{kearnes2016molecular,schutt2017quantum}. Similarly, global aggregation usually merges the summing of node representations (possibly over many layers) with a feed-forward neural network.

\Subsection{A curse of dimensionality lower bound}  
\label{subsec:lower-bound}

Out of all aggregation functions, those that are \textit{injective}\footnote{A function $\f$ is injective if for any multisets $\cX,\cY$, we have $\f(\cX) = \f(\cY)$  \textit{only if} $\cX = \cY$.} possess the largest discrimination capacity. 
Thus, it might be tempting to attempt to maximize the discrimination power of graph neural networks by equipping them with injective aggregators. In theory, this would allow them to distinguish between multisets having the most subtle differences. 

In the following, we present evidence that this is not achievable in the context of learning from finite samples.
To realize our argument we consider a generalization of injectivity expliciting the discrimination precision $\delta$. This subsumes the classical definition (obtained for $\delta=0$) and allows us to reason about the discrimination capacity of functions with finite precision, such as neural networks: 

\begin{definition}[$\delta$-injective function]
An aggregation function $\f$ is called $\delta$\textit{-injective} (within some domain $D$) for some $\delta \geq 0$ if, for any $d$-multisets $\cX$ and $\cY$, we have $\f(\cX) = \f(\cY)$ only if there exists a bijective map $\phi : \cX \to \cY$ with $\|\phi(\bx) - \by\|_p \leq \delta$ for all $\bx \in \cX$ and $\by \in \cY$. 
\end{definition}

The definition determines if two multisets are the same if there exists a bijective map between their (possibly infinite) elements. It can be therefore seen that, for any two multisets with different cardinalities the output representations must differ (since no bijective map exists between them). 
On the other hand, $\delta$ effectively controls the accuracy of discrimination---it could be for example set according to machine precision.

With this in place, our next step will be to establish a lower bound on the output size of any $\delta$-injective aggregation function. To circumvent pathological cases (due to uncountable sets), we will herein constrain each output variable of the functions we consider to the countable set $\mathbb{P} \subset \bbR$ of real numbers that can be computed to within any desired precision by a finite, terminating algorithm.\footnote{Such pathological cases will not appear in the rest of the paper, and thus, in the following parts we revert to the common practice of supposing that the output of a neural network lives in $\bbR^m$.}

\begin{theorem}
    Every $\delta$-injective aggregation function that maps $d$-multisets onto $\mathbb{P}^m$ must have 
    $$
        m \geq  \left( \frac{1}{2\delta} \right)^d \frac{\text{vol}(D)}{\text{vol}(B_p(1))}
    $$
    outputs, where $B_p(1)$ is the unit-norm ball in $\bbR^d$ w.r.t. the $\ell_p$-norm. 
    \label{prop:injective-bound}
\end{theorem}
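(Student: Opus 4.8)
The plan is to prove this via a packing/counting argument: construct a large family of $d$-multisets that are pairwise ``far apart'' (in the sense that no $\delta$-close bijection exists between them), and then observe that $\delta$-injectivity forces the aggregation function to assign a distinct output to each member of this family. Since the outputs live in $\mathbb{P}^m$, one then needs to bound how many distinct values $\mathbb{P}^m$ can realize in a way that contradicts the family being too large unless $m$ is large — but more simply, since $\mathbb{P}$ is countably infinite, a raw cardinality count will not suffice; instead I expect the argument to exploit that the outputs must be genuinely different as multiset representatives, and the real leverage comes from a \emph{volume} comparison rather than pure counting.

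Concretely, first I would fix the multiset cardinality to $n = 1$ (singletons), so that a $1$-multiset is just a point $\bx \in D$ and the $\delta$-injectivity condition reduces to: $\f(\{\bx\}) = \f(\{\by\})$ only if $\|\bx - \by\|_p \leq \delta$. Equivalently, the preimage under $\f$ of any output value is contained in an $\ell_p$-ball of radius $\delta$, hence in a set of volume $\mathrm{vol}(B_p(\delta)) = \delta^d\,\mathrm{vol}(B_p(1))$. Second, I would take a maximal $2\delta$-separated set (a $2\delta$-packing) $\{\bx_1, \dots, \bx_N\}$ of $D$ in the $\ell_p$-metric; by the standard volume bound for packings, $N \geq \mathrm{vol}(D)/\mathrm{vol}(B_p(2\delta)) = (1/2\delta)^d\,\mathrm{vol}(D)/\mathrm{vol}(B_p(1))$. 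Since any two of these points are more than $\delta$ apart, $\f$ must assign them pairwise distinct outputs, so $\f$ takes at least $N$ distinct values on singletons in $D$.

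Third — and this is where I need to be careful — I have to convert ``at least $N$ distinct vectors in $\mathbb{P}^m$'' into a lower bound on $m$. A naive appeal to cardinality fails because $|\mathbb{P}^m|$ is countably infinite for every $m \geq 1$. The resolution I expect the authors use is a dimension/coordinate argument rather than a counting one: each of the $N$ points $\bx_i$ must be \emph{recoverable up to precision $\delta$} from its output $\f(\{\bx_i\}) \in \mathbb{P}^m$, so there is an injection from the $2\delta$-packing of $D \subset \bbR^d$ into $\mathbb{P}^m$; composing with the fact that a $\delta$-net of $D$ needs roughly $(1/2\delta)^d$ coordinates' worth of information, one matches $m$ against the ``effective dimension'' $(1/2\delta)^d\,\mathrm{vol}(D)/\mathrm{vol}(B_p(1))$ of the packing. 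The cleanest route: show the map $i \mapsto \f(\{\bx_i\})$ is injective, and that recovering $\bx_i$ to within $\delta$ from an $m$-dimensional $\mathbb{P}$-valued vector is impossible when $m < N$ because one cannot injectively and ``$\delta$-faithfully'' compress $N$ well-separated points of $\bbR^d$ into fewer than $N$ real coordinates — essentially because each output coordinate, being a single real number, can only separate points along one ``direction'' of discrimination, and $N$ mutually $2\delta$-separated points require $N$ independent such separations. The main obstacle is making this last step rigorous without hand-waving about ``independent directions'': I would want to argue that if $m < N$ then by pigeonhole two of the packing points map to outputs agreeing in enough coordinates to force $\|\bx_i - \bx_j\|_p \leq \delta$, contradicting $2\delta$-separation — so the heart of the proof is the combinatorial claim that $m \geq N$, after which substituting the packing bound for $N$ finishes it.
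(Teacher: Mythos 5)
Your packing construction and the volume bound $N \geq (1/(2\delta))^d\,\mathrm{vol}(D)/\mathrm{vol}(B_p(1))$ coincide exactly with the paper's first step, and you are right to flag the final step as the problem: that is where your proposal genuinely breaks down. Restricting to singletons cannot work. A family of $N$ well-separated $1$-multisets only forces $\f$ to take $N$ distinct values, and $N$ distinct values fit into $\mathbb{P}^1$ (indeed into any infinite set) with no constraint on $m$ whatsoever; the definition of $\delta$-injectivity imposes no continuity, recoverability, or ``faithfulness'' requirement on $\f$, so your hoped-for pigeonhole step --- that $m<N$ forces two packing points to receive outputs agreeing in enough coordinates to be $\delta$-close --- is simply false. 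Nothing prevents $\f$ from encoding all $N$ singletons injectively into a single computable coordinate.

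The ingredient you are missing is the paper's use of \emph{multiplicities}. Rather than singletons, the paper considers the family $F$ of all multisets supported on the packing $X=\{\bx_1,\dots,\bx_N\}$, indexed by multiplicity vectors $\bc\in\mathbb{N}^N$. Because the packing points are more than $2\delta$ apart, any bijection $\phi$ with displacement at most $\delta$ must match copies of $\bx_i$ to copies of $\bx_i$, so $\delta$-injectivity forces $\f$ to be injective on all of $F\cong\mathbb{N}^N$, not merely on $N$ isolated inputs. The paper then concludes $m\geq N$ by a pigeonhole-plus-countability argument ($\mathbb{P}$ is in bijection with $\mathbb{N}$, so the codomain is essentially $\mathbb{N}^m$ and must accommodate $\mathbb{N}^N$). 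Be aware that even this last step is stated informally --- as raw cardinalities, $\mathbb{N}^N$ and $\mathbb{N}^m$ are both countably infinite for finite $N,m$, so ``at least as large'' has to be read as each packing point contributing an independent unbounded counter rather than as a set-theoretic comparison --- but the structural move that makes the argument go through at all is precisely the passage from singletons to arbitrary multiplicities, which your proposal never makes.
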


(All proofs can be found in the supplementary material.) This means that the number of outputs of a $\delta$-injective function $\f$ should be at least $m \geq \left(2 \delta \right)^{-d}$ whenever $B_p(1) \subseteq D$. This is problematic as it is likely that the number of parameters to be learned (e.g., if $\f$ is approximated by an MLP as we will see in the following) would also need to depend exponentially on $d$.

\Subsection{Aggregation functions with bounded discrimination capacity}

In light of this negative result,  we turn to the following restricted desideratum: 

\begin{definition}[($t,\delta$)-injective function] An aggregation function $\f$ is called $(t, \delta)$-injective if, for any $d$-multisets  $\cX_1, \ldots, \cX_t$ there exists function $\varphi$, such that $\f(\{\varphi(\bx) : \bx \in \cX_i\}) = \f(\{\varphi(\bx) : \bx \in \cX_j\})$ only if there exists a bijective map $\phi : \cX_i \to \cX_j$ with $\|\phi(\bx) - \by\|_p \leq \delta$ for all $\bx \in \cX_i$ and $\by \in \cX_j$.
\end{definition}

Thus, here we are only interested in distinguishing between $t$ multisets. The main challenge is that this has to be true for \textit{any} such multisets. One may think for instance that $\cX_1, \ldots, \cX_t$ are chosen by an adversary. Then an aggregation function $\f$ is ($t,\delta$)-injective if it cannot be fooled into mapping different multisets to the same representation: there always exists some function $\varphi$ such that $\f(\{\varphi(\bx) : \bx \in \cX_i\}) \neq \f(\{\varphi(\bx) : \bx \in \cX_j\})$ for all $i \neq j.$

The benefit of this formulation is that, since $\varphi(\bx)$ acts independently on each $\bx \in \cX$, it can be approximated by an MLP with one hidden layer~\citep{hornik1989multilayer,cybenko1989approximation}.
Thus, the discrimination capacity of a ($t,\delta$)-injective function can be optimized by learning the MLP weights.

Next, we examine two $(t,\delta)$-injective functions and reason about their parameter complexity. 

\paragraph{Sum.} One of the most commonly encountered methods of aggregating features over a graph (or neighborhood) entails summing (a function of) the multiset elements:
\begin{align}
        \textsf{sum}(\cX) = \sum_{\bx \in \cX} \varphi(\bx), 
\end{align}
where function $\varphi$ in once more approximated by an MLP.
Though it is known that this function can be injective when the set of possible inputs (possible $d$-multisets) is countable \cite[Lemma 5]{xu2018powerful}, as we found out in Section~\ref{subsec:lower-bound}, the output dimension (and parameter complexity) of injective constructions has to be exponential in $d$. 
However, it turns out that the number of outputs decreases significantly if we only ask for bounded discrimination capacity:

\begin{theorem}
    If $\varphi(\bx)$ has $t$ outputs, $\textsf{sum}(\cX)$ is $(t,\delta)$-injective w.r.t. the $\ell_\infty$-norm.  
\label{prop:sum-mlp}
\end{theorem}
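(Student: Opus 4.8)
The plan is to explicitly construct, for any adversarially chosen $d$-multisets $\cX_1, \ldots, \cX_t$, a map $\varphi$ with $t$ outputs that makes $\textsf{sum}$ discriminative among them (up to the $\delta$ slack in the $\ell_\infty$-norm). The key observation is that $\textsf{sum}(\{\varphi(\bx) : \bx \in \cX_i\}) = \sum_{\bx \in \cX_i}\varphi(\bx)$ is a vector in $\bbR^t$, and we get to design all $t$ coordinate functions. The natural idea is to devote one output coordinate to each multiset $\cX_i$, or more precisely to assign to each multiset a distinguishing "signature" coordinate. First I would handle the easy case: if two multisets have different cardinalities, then choosing any $\varphi$ whose coordinates have a fixed positive lower bound (say, a constant map) already separates them, since the sums differ in magnitude. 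So the real work is separating multisets of equal cardinality.

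For the equal-cardinality case, the approach is a discretization/quantization argument. Partition the bounded domain $D$ into cells of diameter at most $2\delta$ in the $\ell_\infty$-norm (a grid of side $2\delta$ works, up to minor care about cell boundaries). Two multisets that are \emph{not} matched by a $\delta$-bijection must differ in the number of their elements lying in at least one such cell — this is the combinatorial heart of the argument and I would phrase it via a Hall's-theorem / bipartite-matching argument: if every cell contains the same count from $\cX_i$ and $\cX_j$, one builds a bijection moving each point only within its cell, hence by at most $2\delta$; contrapositively, a mismatch in some cell-count obstructs any $\delta$-bijection. Now, over the finite family $\cX_1, \ldots, \cX_t$, only finitely many cells are "active" (contain a point of some $\cX_i$); enumerate the relevant cell-count vectors and note that the map $\cX_i \mapsto (\text{counts in each active cell})$ already separates any two non-$\delta$-equivalent multisets. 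The task reduces to linearly encoding these count-vectors into $t$ coordinates: since there are only $t$ multisets, choose $\varphi$ so that $\varphi(\bx)$ depends only on which cell $\bx$ lies in, and pick the cell-to-$\bbR^t$ assignment (e.g.\ generic/random, or an explicit Vandermonde-style choice) so that the $t$ resulting sum-vectors $\sum_{\bx \in \cX_i}\varphi(\bx)$ are pairwise distinct; $t$ generic linear functionals suffice to separate $t$ points, and in fact one can be more careful and use the structure so that distinctness is guaranteed rather than generic. Finally, I would remark that this cell-indicator $\varphi$ (a piecewise-constant function of $\bx$) is exactly the kind of map approximable by a one-hidden-layer MLP, consistent with the discussion preceding the theorem.

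The main obstacle I anticipate is the matching step: carefully proving that equal cell-counts (for a grid of side $2\delta$) imply the existence of a bijection with $\ell_\infty$-displacement $\le \delta$, and conversely that a count mismatch rules out all $\delta$-bijections. The subtlety is boundary effects — a point within $\delta$ of a cell wall could be matched across the wall — so the clean statement is that equal counts give a valid $\delta$-bijection (cells of side $2\delta$ have $\ell_\infty$-diameter $2\delta$, but one can instead argue each point moves to the cell center, distance $\le \delta$), while the converse needs the observation that \emph{if} a $\delta$-bijection existed, then refining to half-open cells of side $2\delta$ and tracking where mass goes yields equal counts on a suitably chosen grid; handling this cleanly (perhaps by choosing the grid offset generically so no $\cX_i$-point lies near a wall, which is possible since the multisets, though possibly infinite in the definition, have the relevant structure captured by finitely many cells for a finite family) is where the care goes. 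The rest — cardinality separation and the linear-encoding-into-$t$-coordinates step — is routine once the quantization lemma is in hand.
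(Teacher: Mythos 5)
Your proposal follows essentially the same route as the paper: discretize the domain into cells of diameter about $\delta$, use the vector of per-cell counts as the discriminating statistic, show that equal counts yield a $\delta$-bijection (so non-$\delta$-equivalent multisets must differ in some cell count), and then compress the count vector into $t$ output coordinates via a $\varphi$ that depends only on the cell containing $\bx$. The paper uses Voronoi cells of a $\delta$-cover and selects a subset $S$ of at most $t$ cells whose indicator-counts already separate the $t$ multisets (so $[\varphi(\bx)]_l = \indicate{\bx \in C_l}$ for $l \in S$), whereas you propose generic linear encodings of the count vector; both are valid, and your within-cell pairing is in fact cleaner than the paper's Eulerian-circuit construction of the bijection. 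Two corrections, though. First, your ``main obstacle'' is not one: the definition of $(t,\delta)$-injectivity only requires that equal outputs imply the existence of a $\delta$-bijection, so you only need the direction ``equal cell counts $\Rightarrow$ $\delta$-bijection exists''; the converse (that a count mismatch rules out every $\delta$-bijection) is never used, and the boundary/offset issues you worry about evaporate. Second, your cell size is off by a factor of two, and the proposed fix does not typecheck: with cells of $\ell_\infty$-side $2\delta$, matching within cells only bounds the displacement by $2\delta$, and you cannot ``move each point to the cell center'' because $\phi$ must map elements of $\cX_i$ to elements of $\cX_j$, not to centers. Simply take cells of $\ell_\infty$-diameter $\delta$ (side-$\delta$ cubes, or Voronoi cells of a $\delta/2$-cover) and the argument closes.
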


The above theorem suggests that the parameter complexity, i.e., the number of weights of the MLP, depends on the discrimination capacity $t$ of $\textsf{sum}$, but is independent of the maximum cardinality and dimension of multisets involved. 
This is an encouraging result for situations where we only need to distinguish between a few different types of multisets of possibly very large cardinality, such as for instance when we are aggregating the features of the neighbors of a given node in a scale-free graph.

\begin{figure*}
\centering
\begin{subfigure}{.42\textwidth}
  \centering
  \includegraphics[width=1.00\linewidth, trim=0mm 0mm 0mm 0mm, clip]{proj_hist.pdf}
  \label{fig:robustness:graph}
\end{subfigure}%
 \hfill
\begin{subfigure}{.42\textwidth}
    \centering
  \includegraphics[width=1.0\linewidth, trim=0mm 0mm 0mm 0, clip]{proj_hist_rotated.pdf}
  \label{fig:robustness:filters}
\end{subfigure}
\vspace{-5mm}
\caption{
\small With the projective histogram one computes a 1D histogram over each projected coordinate of a multiset (colored points). The translated kernels giving rise to bins are shown in grey and the colored bars correspond to the (normalized) $\p$ outputs. On the left it can be seen that without projection, $\p$ is blind to the differences of two different 2-multisets (green and blue). The aggregation function however can discriminate between them if we first employ a function $\varphi$ independently on each element. For example, on the right, the output of $\p$ becomes different when the multiset elements are rotated by 45 degrees. \vspace{-4mm} 
} 
\label{fig:phist}
\end{figure*}

\paragraph{Projective histogram.} We now propose an alternative tailored to situations where we need large discrimination capacity (i.e., $t \gg d$) without increasing the number of parameters.

The function in question summarizes a multiset by computing multiple one dimensional histrograms\footnote{A ``proper'' multi-dimensional histogram can be shown to be $\delta$-injective and has exponentially-many outputs.} of its projected elements.
Concretely, let $P = \{p_l = \frac{2l-1}{b}-1: l = 1, \cdots, b\}$ be a set of $b$ equidistant points in $[-1,1]$, corresponding to the bin centers.
The \textit{projective histogram} $\p(\cX) \in \bbR^{d \times b}$ function is defined as: 
\begin{align}
    [\p(\cX)]_{i,l} = \sum_{\bx \in \cX} \kappa(| [\varphi(\bx)]_i - p_l|), 
\end{align}
with $\kappa : \bbR_+ \to \bbR_+$ a kernel of width $w$ and function $\varphi$ having $d$ outputs. 
For a visual demonstration, we refer the reader to Figure~\ref{fig:phist}.
Though a new addition to graph neural network toolbox, histograms have been considered before in the context of deep learning~\citep{ustinova2016learning,chiu2015see,DBLP:journals/corr/abs-1804-09398}. In fact, for certain kernels it is possible to also learn the points $P$ as well as the kernel widths. 
However, as it is shown next, appropriately constructed projective histograms are $(b d,\delta)$-injective, even when the bins are fixed:

\begin{theorem}
    Let $\kappa$ be a uniform kernel of width $w = 1/b$ and fix $m = b d\geq t$ with $b = 2/ \delta + 1 > 1$. There exist at least $m!$ different functions $\varphi(\bx)$ with $d$ outputs such that $\p(\cX)$ is $(t,\delta)$-injective w.r.t. the $\ell_\infty$-norm.
\label{prop:proj-hist-mlp}
\end{theorem}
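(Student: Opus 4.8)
The plan is to turn the statement into a finite linear-algebra problem about a ``binning'' operator and then exhibit the required $\varphi$ directly. First I would reduce to counting: since $D$ is bounded, partition it into finitely many cells $Q_1,\dots,Q_N$, each an axis-aligned cube of side at most $\delta$, hence of $\ell_\infty$-diameter at most $\delta$. The key observation is that if two $d$-multisets $\cX_i$ and $\cX_j$ put the same number of elements (counted with multiplicity) into every cell $Q_k$, then matching their elements cell by cell yields a bijection $\phi\colon\cX_i\to\cX_j$ with $\|\phi(\bx)-\bx\|_\infty\le\operatorname{diam}_\infty(Q_k)\le\delta$, so $\cX_i$ and $\cX_j$ are $\delta$-equivalent. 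Hence it suffices, for any fixed $\cX_1,\dots,\cX_t$, to build some $\varphi$ for which $\p(\{\varphi(\bx):\bx\in\cX_i\})=\p(\{\varphi(\bx):\bx\in\cX_j\})$ forces equal cell counts.

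Next I would linearize. Take $\varphi$ constant on each cell, equal to some $y_k\in(-1,1)^d$ on $Q_k$ chosen off the bin boundaries. Because $\kappa$ is the uniform kernel of width $1/b$ and the centers $p_l$ are equidistant, the translated kernels tile $[-1,1]$, so $[\p(\{\varphi(\bx):\bx\in\cX_i\})]_{r,l}$ is exactly the number of elements of $\cX_i$ lying in cells $Q_k$ whose $y_k$ has $r$-th coordinate in bin $l$. Writing $v^{(i)}\in\mathbb{Z}^N$ for the cell-count vector of $\cX_i$, this reads $\p(\{\varphi(\bx):\bx\in\cX_i\})=Lv^{(i)}$, with $L\colon\bbR^N\to\bbR^{d\times b}$ the linear map determined by the $d$ bin-assignment functions $f_r\colon k\mapsto(\text{bin of }[y_k]_r)$. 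The task becomes: choose $f_1,\dots,f_d$ so that $Lv^{(i)}=Lv^{(j)}\Rightarrow v^{(i)}=v^{(j)}$, equivalently so that $\ker L$ meets trivially the span $W$ of the $\le\binom{t}{2}$ differences $v^{(i)}-v^{(j)}$, where $\dim W\le t-1\le m-1$.

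It remains to choose the bins so that $\ker L\cap W=\{0\}$. Each $f_r$ imposes at most $b$ independent ``bin-sum $=0$'' constraints, so $\ker L$ can be pushed down to dimension $\max(0,N-m)$; choosing the $d$ partitions greedily --- each $f_r$ selected so that its bin-sum functionals are as linearly independent as possible on the part of $W$ surviving the previous rounds --- strips off $b$ (or all remaining) dimensions of that subspace per round, so $\dim(\ker L\cap W)$ reaches $\max(0,\,t-1-bd)=0$ after $d$ rounds, using precisely $t\le bd=m$. In the simple regime $N\le m$ (which holds, e.g., in low dimension) this just says $L$ can be made injective by routing the cells to distinct histogram entries, and one then counts at least $m!$ admissible $\varphi$ from the $m!$ matchings of cells with the $m$ histogram entries (with further continuous freedom within each bin); the same lower bound survives in general. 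Finally, mollify the piecewise-constant $\varphi$ into a continuous, MLP-approximable function with $d$ outputs without changing any bin membership, which concludes the proof.

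The delicate point is the last step when $N>m$: here $L$ is not a generic $m\times N$ matrix but a stack of $d$ partition-sum matrices, so one must verify that $d$ coordinatewise partitions into $b$ parts really can ``detect'' every difference vector in $W$ --- equivalently, that a well-chosen partition's bin-sum functionals attain full rank on any prescribed subspace --- and the accounting that turns ``a valid $\varphi$ exists'' into ``at least $m!$ do'' likewise needs a bit of care.
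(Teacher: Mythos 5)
Your overall architecture matches the paper's: discretize the bounded domain into $\ell_\infty$-cells of diameter at most $\delta$ so that equal cell counts force $\delta$-equivalence, observe that for a uniform tiling kernel the projective histogram of $\{\varphi(\bx)\}$ is a \emph{linear} function of the cell-count vector once $\varphi$ is constant on cells, and then choose $\varphi$ so that this linear map separates the $t$ count vectors. The paper does the same thing, writing $\vec{\p(\cX)}=\bA\,\h(\cY)$ with $\h$ the full $b^d$-cell histogram and putting all the freedom into where $\varphi$ relocates the at most $t$ relevant cells; your parametrization through the bin-assignment maps $f_1,\dots,f_d$ is an equivalent bookkeeping. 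The difference is that the paper's entire technical content is the explicit combinatorial construction of a distinguished index set $S^*$ (a union of $d$ axis-parallel lines through a corner of the $b^d$ lattice plus corrective corner points) on which the $bd$ bin-sum functionals separate vectors, whereas you leave exactly this step---which you yourself flag as ``the delicate point''---as an unproven greedy claim. That step is the crux of the theorem, not an accounting detail, so as written the proposal has a genuine gap.

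Worse, the target you reduce to is strictly stronger than needed and is in general unattainable. Separating the $t$ vectors only requires that none of the $\binom{t}{2}$ individual difference vectors lie in $\ker L$; demanding $\ker L\cap W=\{0\}$ for the whole span $W$ is not ``equivalent,'' and it fails for a rank reason: for each output coordinate $r$ the $b$ bin-sum functionals add up to the total-count functional $v\mapsto\sum_k v_k$, so $\operatorname{rank}(L)\le d(b-1)+1=m-d+1$ no matter how the partitions are chosen. If the adversary picks $t$ equal-cardinality multisets whose count vectors are affinely independent, then $\dim W=t-1$, and for $t>m-d+2$ (possible whenever $d\ge 3$) no choice of bins makes $\ker L$ meet $W$ trivially. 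Your ``$b$ dimensions stripped per round'' thus overcounts by one per coordinate, and the greedy argument cannot bridge the gap between $t\le m-d+2$ and the claimed $t\le m$; you must instead exploit that only at most $t\le m$ cells are actually occupied and need distinguishing, and route those onto a set of cells that the rank-deficient operator still tells apart (this is what the paper's support lemma is for, and any correct argument has to confront the same obstruction). Finally, the $m!$ count is also deferred: the number of injections of $s\le m$ occupied cells into $m$ target cells is $m!/(m-s)!\le m!$, so reaching the stated $m!$ needs either $s\ge m-1$ or an appeal to the remaining continuous freedom in $\varphi$, neither of which you supply.
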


With $\p$ therefore, the function applied element-wise on each feature can have $d$ outputs (and not $t$ as with $\textsf{sum}$). This is beneficial in terms of parameter complexity when $\varphi$ is approximated by an MLP, as the number of weights to be learned remains independent of $t$. If the number of parameters is of no concern, then sums can be as powerful as projective histograms (in theory, sum aggregation followed by an MLP can be used to approximate any aggregation function~\citep{xu2018powerful}).

It is important to stress that the above analysis does {not} guarantee that any GNN will \textit{learn} to be $(t,\delta)$-injective. It is only a statement about the discrimination capacity of some aggregation functions. Nevertheless, it is encouraging that the appropriate parametrization of the MLP is far from unique: according to Theorem~\ref{prop:proj-hist-mlp}, for any $t$ multisets, each consisting of possibly a very large number of vectors in $\bbR^d$, there exist a very large number ($m!$) of functions $\varphi$, such that the respective projective histogram can distinguish between them. 

\paragraph{In practice.} We recommend using the sum and its multiple-hop variants (such as graph spectral convolution) for local aggregation and projective histograms for global pooling. For the latter, our experiments indicate that it can be advantageous to optimization to utilize partially overlapping and smooth kernels (as in Figure~\ref{fig:phist}) instead of uniform non-overlapping kernels. Though we lack a formal proof, we suspect that the $(t,\delta)$-injectivity guarantees also extend to this case.

\Section{A neural network for structural graph classification} 

Our goal is to find a generic way of learning discriminative graph representations that also generalize well, without relying on node or edge attributes. 
We opt for a two-level scheme, where the representation of a graph is constructed by aggregating those of its nodes, as follows:  
\begin{itemize}[itemindent=0cm, leftmargin=4mm]
    \item An embedding $\be_j$ is learned for every node $v_j$ aiming to capture its structural role in the graph:
    $$
        \be_j = f_\text{node}( \bdelta_j; G) \in \bbR^{m_\text{node}} \quad \text{for all} \quad v_j \in V,
    $$
    where $f_\text{node}$ is a graph neural network conditioned on the node's one-hot encoding $\bdelta_j$.
    
    \item A graph representation is then obtained by combining node embeddings in a manner that depends on their prevalence and inter-relation:
    $$
        \be_G = f_\text{graph}( \bE; G) \in \bbR^{m_\text{graph}} \quad \text{with} \quad  \bE =  [\be_1, \ldots, \be_n]^\top \in \bbR^{n \times m_\text{node}},
    $$
    where, from now on, $n=|V|$ corresponds to the number of nodes in $G$. 
\end{itemize}
Each representation above is learned by a separate \textit{structural embedding network} $f$. We stress that both embedding networks $f_\text{node}$ and $f_\text{graph}$ have the \textit{same} architecture (described in Section~\ref{subsec:embedding_network}) and are optimized \textit{jointly} in an end-to-end fashion, but they do not share parameters. We refer the reader to the supporting material for a schematic illustration of our neural network.

\Subsection{Structural embedding networks} 
\label{subsec:embedding_network}

Irrespective of whether they learn a node or graph representation, both structural embedding networks ($f_{\text{node}}$ and $f_{\text{graph}}$) output a vector of fixed dimension $m$ ($m_{\text{node}}$ and $m_{\text{graph}}$).

Let $\bx_i \in \bbR^{q}$ be an input feature vector associated with $v_i$ and $\bX$ the corresponding $n\times q$ matrix ($\bX$ corresponds to $\bdelta_j$ and $\bE$, respectively). 
We first pass $\bX$ through a graph convolutional network $g_\text{conv}(\,\cdot\,; G) : \bbR^{n\times q} \to \bbR^{n \times d}$ conditioned on $G$ in order to capture the short- or long-range inter-dependencies between nodes. The output of the GCN is written in multiset notation as  
$$
     \cX = \{ [g_\text{conv} (\bX; G)]_{i,:} ~~~  \text{for all} ~~ v_i \in V\},
$$ 
with $\cX$ having $n$ elements, each being the GCN output on a given node. Function $g_\text{conv}$ should contain multiple (at least two) layers of graph convolution, meaning that it repeatedly aggregates features over node neighborhoods. We chose to rely on parametrized spectral convolution layers~\citep{defferrard2016convolutional}, though other choices are also possible (e.g.,~\citep{levie2017cayleynets,DBLP:journals/corr/abs-1901-01343,gilmer2017neural,hamilton2017inductive}).

The structural embedding network then returns: 
$$
     f (\bX; G) = \bW \vec{\p(\cX)} + \bb \in \bbR^{m} \quad \text{with} \quad m = b d.
$$     
As discussed in Section~\ref{sec:aggregation_functions}, the discrimination capacity of $\p$ depends on the choice of a function $\varphi: \bbR^{d} \to \bbR^{d}$ aiming to locally transform the features of each node. In our design, this function is learned implicitly by the GCN: a $\tau$-layer GCN is at least as powerful as a $\tau$-layer perceptron and thus can serve as a universal function approximator whenever $\tau\geq 2$. 
Also, the final linear layer is redundant when $f (\bX; G)$ is used to compute an intermediate representation---we therefore add it only to the final structural embedding network $f_\text{graph}$.

\Subsection{Faster and more efficient training}
\label{subsec:tricks}

We would like to explicit two tricks that improve the training of our neural network. 

\paragraph{Multiset normalization.} Prior to feeding a multiset into $\p$, its elements should be properly normalized to lie in $[-1,1]$. A hyperbolic tangent $\sigma(t) = (e^{t} - e^{-t})/(e^{t} + e^{-t})$ suffices to carry out the normalization, but its use can also lead to vanishing gradient problems, especially when the network weights are improperly initialized. 
To deal with these issues, we took inspiration from batch normalization: interpreting the elements in the $i$-th coordinate of every $\bx \in \cX$ as a random variable with mean $\mu_i$ and variance $\sigma^2_i$, we perform the following whitening (before $\sigma$): 
$$
    [\hat{\bx}]_i = \frac{[\bx]_i}{\sigma_i} - \mu_i, 
$$
prior to utilizing a projective histogram. For the particular case of $f_\text{node}$, the statistics are computed jointly over the multisets of all nodes in the same graph. Though the mean and variance vectors can also be concatenated in the output of $\p(\hat{\cX})$, we did not choose to do so in our implementation as we could not identify any empirical benefits.

\paragraph{Implicit regularization by sampling.} 
To ease computational and memory requirements, we approximate $\be_G$ only w.r.t. a subset of node embeddings sampled (with replacement) from $V$, typically 32. We then rely on the graph embedding network (specifically on $g_\text{conv}$) to infer the missing node embeddings. At test time, the classification variance is controlled by averaging the logits over multiple realizations (we choose 10).

Though initially conceived for computational reasons, we discovered that the randomness introduced by sampling often helped to combat overfitting. In fact, for small graphs, we obtained better results by oversampling the node set (i.e., by sampling each node embedding multiple times and adding the resulting vectors) rather than considering every node exactly once.    
Our hypothesis is that, since for the same graph the input of the graph embedding network can differ depending on the sampling realizations, the graph embedding network cannot rely too much on specific node embeddings. In addition, the node embedding network is given an incentive to learn node embeddings that the graph embedding network can successfully interpolate. This constrains its capacity and can be beneficial to generalization (as confirmed by our experiments).   
We should note that, though sampling has also been used before to accelerate aggregation functions (see e.g., ~\citep{7383743, hamilton2017inductive}), we are not aware of any previous works using it to reduce overfitting.

\Subsection{Additional considerations}

In the following, we discuss and motivate some key aspects of our architecture: 

\paragraph{On the use of one-hot representations.} The use of $\bdelta_j$ as an input to the node embedding network $f_\text{node}$ and, as a consequence, the graph convolutional network $g_\text{conv}$ may initially come across as simplistic. It is however informative of the role of a node w.r.t. its surroundings. It has also been used successfully in the past~\citep{DBLP:journals/corr/ParattePV17,donnat2018learning}---though the latter approaches used hand-crafted convolution kernels and did not rely on learning. In fact, an interesting parallel can be drawn to system identification: quantity $g_\text{conv}(\bdelta_j, G)$ can be seen as the parametrized impulse response of a black box system defined by the graph and the convolution kernel centered at $v_j$. A successful parametrization entails rendering the impulse response sensitive to those properties of a graph relevant to the classification task, while at the same time being robust to errors in the graph structure, e.g., induced by noise or introduced during graph construction.

\paragraph{Why convolve node embeddings?} One of the design differences of our architecture with standard GNN is that, instead of directly aggregating node embeddings with a global pooling function, we first feed them to a convolutional network $g_\text{conv}(\, \cdot\,; G)$. 
To understand the intention behind this step, let us consider a toy example in which $f_\text{node}$ outputs only two node embeddings: a black and a white one (symbolically).
In this setting, independently of which aggregation function is used, graphs having the same number of black and white nodes will be indistinguishable (by the permutation invariance of aggregation functions). On the contrary, by using $g_\text{conv}$ we aim to make the graph embedding sensitive to {how} the black and white nodes are spatially distributed. For instance, a GCN can distinguish between the cases where colored nodes are randomly dispersed and are distributed in color-coherent clusters (by smoothing the embeddings).
In other words, convolving node embeddings allows us to learn \textit{non-global} node embeddings (by fixing the receptive field of the node embedding network to be smaller than the graph diameter), while still considering global information in the graph representation. Altenatively, to achieve a similar effect one should learn node embeddings that discriminate nodes over the entire graph.  

\paragraph{Computational complexity.} Each structural embedding network takes time linear to the number of nodes and edges, number of graph convolution layers, and number of bins. This implies that the \textit{exact} computation of $\be_G$ is quadratic w.r.t. the number of nodes. Luckily, as previously discussed, it can be beneficial to compute $\be_G$ on the basis of a (random) subset of nodes of constant size, in which case the end-to-end computational complexity for graph embedding remains linear. 

\hide{
\Section{Robustness to structured graph perturbations}

We are interested in the capacity of a GNN to learn representations that are robust to {structured perturbations}: changes in the edge weights that affect structural properties of a graph that are (possibly) orthogonal to the learning task. 
To this end, we analyze the effect of perturbations on computed node embeddings. Rather than focusing solely on magnitude, we distinguish between perturbations based on how they affect a given subset of the graph spectrum. This allows us to reason about the sensitivity or robustness of our neural network to certain global properties of a graph, such as its community structure, diameter and colorability, that cannot not be understood by simpler arguments~\citep{DBLP:journals/corr/IsufiLSL17,DBLP:journals/corr/abs-1901-10524}

\paragraph{Setting.} We consider a simple (though still non-linear) version of our node embedding network $f_\text{node}$, containing a single graph convolution layer with $d$ output channels: 
$$
    [g_\text{conv}(\bdelta_j; G)]_{:,c} = g_{c}(\bL) \, \bdelta_j \quad \text{for} \quad c = 1, \ldots, d,
$$
followed by a tanh and and a projective histogram (as in Section~\ref{subsec:embedding_network}).
Above, $\bL$ is typically a normalized Laplacian matrix of $G$ (though it can be any symmetric matrix capturing $G$'s structure) and the matrix function $g_{c}(\bL) \in \bbR^{n\times n}$ corresponds to a parametrized graph convolution kernel, e.g., having a polynomial~\citep{shuman2011chebyshev,defferrard2016convolutional} or rational~\citep{loukas2015distributed,levie2017cayleynets,DBLP:journals/corr/abs-1901-01343} spectral response $g_c(\lambda)$, where $\lambda$ is an eigenvalue of $\bL$ (see also~\citep{bruna2013spectral,shuman2016vertex}). 

Our goal will be to characterize how far the perturbed node embedding $\tilde{\be}_j$ is to $\be_j$ when $f_\text{node}(\bdelta_j; \cdot)$ sees graph $\tilde{G}$ instead of $G$, where the former's spectrum is partially perturbed.  
However, rather than examining eigenvectors $\bu_i$ and their associated eigenvalues $\lambda_i$ individually (resp. $\tilde{\bu}_i$ and $\tilde{\lambda}$ for the Laplacian $\tilde{\bL}$ of $\tilde{G}$), we quantify graph changes in terms of their effect on the eigenspaces 
$$\mathbf{\Pi}_{I} = \sum_{i} \indicate{ \lambda_i \in I} \, \bu_i \bu_i^\top \quad \text{and} \quad  \tilde{\mathbf{\Pi}}_{I} = \sum_{i} \indicate{ \tilde{\lambda}_i \in I} \, \tilde{\bu}_i \tilde{\bu}_i^\top,$$
where $I$ is a continuous interval within some universal bounds $[\lambda_{\text{min}}, \lambda_{\text{max}}]$ .
Eigenspaces provide a more robust notion of distance between graphs sharing the same node set. Individual eigenvectors with similar eigenvalues can change significantly when the edge weights are slightly perturbed. However, the change of eigenspaces can be more subtle as it is dominated by an inverse square eigenvalue-distance law, meaning that well-separated eigenspaces can remain largely unaffected~\citep{pmlr-v70-loukas17a}. We will expand on this further in the following.

To connect graph convolution kernels with eigenspaces we approximate every $g_c$ by a piece-wise constant function $h_c$ that considers the eigenvector in every eigenspace with the same weight. 
We then argue that -up to approximation accuracy- the output embeddings will depend on the magnitude and weight associated to each eigenspace perturbation:

\begin{proposition}
Fix any partition $(I_1, \ldots, I_k)$ of $ [\lambda_{\text{min}}, \lambda_{\text{max}}]$ and find piece-wise constant functions 
$$
h_c(\lambda) = \sum_{p} \alpha_{c,p} \,  \indicate{ \lambda \in I_p } \quad \text{such that} \quad  \max_{\lambda \in [\lambda_{\text{min}}, \lambda_{\text{max}}]} |g_c(\lambda) - h_c(\lambda)|  \leq \epsilon.
$$ 
Then, for any graphs $G = (V, E)$ and $\tilde{G} = (V, \tilde{E})$, we have
\begin{align}
    \max_j \|\be_j - \tilde{\be}_j \|_2 &\leq 2.83 \, \gamma \sqrt{\xi d} \left( \epsilon d + \sum_{p=1}^k \alpha_{p} \, \| \mathbf{\Pi}_{I_p} -  \tilde{\mathbf{\Pi}}_{I_p} \|_2 \right) \quad \text{with} \quad \alpha_p = \sum_{c=1}^d |\alpha_{c,p}|, \notag 
\end{align}
where at most $\xi$ bins overlap and $\gamma$ is the Lipschitz constant of kernel $\kappa$.
\label{proposition:robustness}
\end{proposition}

\begin{figure*}
\centering
\hspace{-4mm}
\begin{subfigure}{.31\textwidth}
\vspace{1mm}
  \centering
  \includegraphics[width=1.00\linewidth, trim=3mm 15mm 7mm 0mm, clip]{robustness_graph}
  \label{fig:robustness:graph}
\end{subfigure}%
\,
\begin{subfigure}{.33\textwidth}
    \centering
  \includegraphics[width=1.08\linewidth, trim=6mm 7mm 2mm 6, clip]{robustness_filters}
  \label{fig:robustness:filters}
\end{subfigure}
\,
\begin{subfigure}{.33\textwidth}
  \centering
  \includegraphics[width=1.08\linewidth, trim=6mm 7mm 2mm 6, clip]{robustness_embeddings}
  \label{fig:robustness:embeddings}
\end{subfigure}%
\vspace{-4mm}
\caption{\small
Toy example of how robustness to structured graph perturbations may arise. 
(left): Not all edge additions are equivalent---adding the green/blue edge affects the community-structure/colorability of the graph. (middle): Independent of the amplitude, graph convolution kernels are unaffected by spectral perturbations that occur within intervals over which they are flat. The green/blue highlighted regions correspond to the eigenspaces related to community-structure/colorability. (right): Node embeddings become sensitive/robust to different types of edge additions as a consequence of kernel flatness.   \vspace{-4mm} 
} 
\label{fig:robustness}
\end{figure*}

Robustness may thus arise in two ways: (\textit{amplitude}) First, the neural network may diminish the influence of a given eigenspace $\mathbf{\Pi}_{I}$ by reducing the amplitude $|a_{c,p}|$ of all channels for some interval $I_p \supseteq I$. This agrees with the conventional way of interpreting graph convolution~\citep{shuman2016vertex}. 
(\textit{flatness}) More interestingly, the neural network may select convolution kernels whose response is flat over $I_p$, indpendently of amplitude. To understand why this holds, suppose that we introduce a perturbation solely on $\mathbf{\Pi}_{I}$, with $I = [\mu_1,\mu_2] \subset I_p = [\lambda_1,\lambda_2]\subset [\lambda_\text{min}, \lambda_\text{max}]$. It follows from standard perturbation theory~\citep{davis1970rotation,yu2014useful} that, if $\min \{|\mu_1 - \lambda_1|,\ |\lambda_2 - \mu_2|\} \gg \| \bL - \tilde{\bL}\|_2$, then $ \sum_{\lambda_j \in I} \sum_{\lambda_i \notin I_p} (\bu_i^\top\tilde{\bu}_j)^2 \simeq 0$ and moreover $\| \mathbf{\Pi}_{I_p} - \tilde{\mathbf{\Pi}}_{I_p}\|_2 \simeq 0$, as needed. 

\paragraph{A toy example.} We demonstrate these results in Figure~\ref{fig:robustness}. The colored edges (leftmost figure) affect different parts of the spectrum: adding the green edge reduces the separation between the two communities and affects the bottom part of the spectrum (highlighted in green), whereas the addition of the blue edge stops the graph from being 2-colorable (equiv. bipartite) and affects the top part of the spectrum (highlighted in blue). It then follows from Proposition~\ref{proposition:robustness} that the graph convolution kernel whose response is flat over the bottom/top part of the spectrum (see continuous/dashed lines) will render $f_\text{node}$ robust to edge perturbations affecting the community-structure/colorability. This can be observed in the right-most figure, where we plot the magnitude of the output perturbation (in terms of frobenius norm) as a function of the added edge weight.

}

\Section{Empirical results}

We evaluate our method by comparing it against 14 other methods (6 graph kernels and 8 GNN) in 9 benchmark datasets, and by performing ablation studies that aim to test the effect of specific architecture choices.

\paragraph{Datasets.}
We use 5 bioinformatics datasets: MUTAG, ENZYMES, PROTEINS, DD, NCII and 4 social network datasets: IMDB-B(INARY), IMDB-M(ULTI), COLLAB, REDDIT-B(INARY).\footnote{Download from \url{https://ls11-www.cs.tu-dortmund.de/staff/morris/graphkerneldatasets}.} 
We insist that in our setting only the graph structure is given as input with no additional node or edge features. As a result, we intentionally discarded any attributes, when present.

\paragraph{Baselines.}
We use the graph kernels implemented by the Grakel library~\citep{siglidis2018grakel}, where we replace all attributes with zeros. The graph kernel baselines are:
WL/Subtree, 
WL/Shortest-Path~\citep{shervashidze2011weisfeiler}, 
SvmTheta~\citep{johansson2014global},
ShortestPath~\citep{borgwardt2005shortest},
PyramidMatch~\citep{nikolentzos2017matching}, and
Graphlet Sampling~\citep{prvzulj2007biological,shervashidze2009efficient}.
We use the default parameters for all methods, except WP/Shortest-Path and Graphlet Sampling, where we decrease the iterations/samples to address computational issues.

For the GNN baselines, we report results from the pytorch geometric benchmark~\citep{Fey/Lenssen/2019}. For attributed datasets, we remove all attributes, use as input features a one-hot encoding of the node degrees (with a maximum value of 40), and rerun the benchmark. The GNN baselines are:
GCN~\citep{kipf2016semi},
GIN-0,
GIN-$\epsilon$~\citep{xu2018powerful},
GraphSAGE, 
GraphSAGE w/o JK~\citep{hamilton2017inductive},
GlobalAttention~\citep{li2015gated},
Graclus (coarsening based)~\citep{dhillon2007weighted}, and
Set2SetNet~\citep{vinyals2015order}.

For all methods, we use 10-fold stratified cross-validation and report the mean/std-dev of the test accuracy. For recomputed GNN benchmarks, we increase the default number of epochs to 400 to ensure convergence and perform a hyper-parameter search over the number of layers (2, 3, 4, 5) and the number of channels (16, 32, 64, 128).

\paragraph{Experiment settings.} Both structural embedding networks of our base architecture are composed of two $d \in \{16, 24, 32\}$-channel spectral convolution layers of order $\in \{4,5,6,7\}$ followed by a projective histogram with $b \in \{5, 8, 10\}$ bins (the exact values depend on the dataset). For training, we use the Adam optimizer and a batch size of $\in \{8,20\}$. The exact hyperparameters are listed in the supplementary material. 
We also tested two ablated variants of our architecture: the first uses a sum instead of a projective histogram, and the second is trained with a deterministic input, i.e., $f_{\text{graph}}$ does not rely on the embeddings of a randomly sampled node-set, but uses the embeddings of all nodes. In the later case, due to computational constraints we only considered a subset of the datasets.

\begin{wraptable}{r}{0.6\textwidth}
\vspace{-3mm}
\setlength\tabcolsep{1.6mm}
\resizebox{0.6\textwidth}{!}{%
\begin{tabular}{rcccccc}
\toprule
                 & MUTAG  & ENZYMES  & IMDB-B  & IMDB-M & PROTEINS  & NCI1  \\ 
\midrule
with sampling  & 4.2   & 22.8    & 7.8     & 7.2   & 8.1      &  12.4              \\ 
deterministic    & 8.1   & 62.6    & 9.0     & 11.5  & 25.8     &  19.6              \\ 
\bottomrule
\vspace{-2mm}
\end{tabular}%
}
\caption{{\small \textbf{Generalization error}. We report the difference of classification error (in percentage) between the training and the testing set on a subset of (smaller) datasets. 
}}
\vspace{-3mm}
\label{tab:gen-error}
\end{wraptable}

\paragraph{Results.}
Table~\ref{tab:main-result} presents our results. 
In the absence of attributes, GNN baselines can often be seen to be inferior to graph kernels---which we believe is due to overfitting. For richer datasets (DD, COLLAB and REDDIT-B), GNN become more efficient as they can leverage their discrimination power to a better effect.
In contrast, our architecture attains competitive accuracy in most datasets, often exceeding the state-of-the-art.
One of the key elements of our network is random sampling, which we argue reduces overfitting by acting as an implicit regularization. 
This claim is supported by the poor performance of the deterministic variant of our architecture and by the observation that randomness reduces significantly the generalization error---reported in Table~\ref{tab:gen-error}.  
Using a projective histogram can also lead to a modest accuracy increase, with the difference being particularly prominent for the ENZYMES dataset. For the latter, we noticed that most methods were prone to severe overfitting (see e.g., Table~\ref{tab:gen-error}). Moreover, for certain datasets the performance of baselines dropped significantly as compared to the results attained when attributes are available~\citep{Fey/Lenssen/2019}, suggesting that, though these algorithms might be efficient at utilizing attributes, they can sometimes struggle to classify graphs based purely on structure.

\begin{table}[]
\centering
\setlength\tabcolsep{1.6mm}
\resizebox{\textwidth}{!}{%
\begin{tabular}{rccccccccc}
\toprule
                                & MUTAG               & ENZYMES            & IMDB-B*             & IMDB-M             & PROTEINS           &  DD                & NCI1               & COLLAB*             & REDDIT-B*  \\ 
\midrule
WL/Subtree                      & 75.9 $\pm$ \, 8.6   & 23.2 $\pm$ 4.2     & 72.5 $\pm$ 4.2     & \b{50.7 $\pm$ 5.9} & 69.4 $\pm$ 3.7     & 58.7 $\pm$ 0.2     & 71.5 $\pm$ 2.1     & 77.7 $\pm$ 2.2     & 66.8 $\pm$ 5.2 \\
WL/Shortest-Path                & 83.3 $\pm$ 10.7     & 25.8 $\pm$ 5.7     & 72.1 $\pm$ 3.8     & 50.3 $\pm$ 5.5     & \b{74.2 $\pm$ 2.5} & 75.9 $\pm$ 2.4     & 66.3 $\pm$ 3.0     & 71.4 $\pm$ 2.2     & 83.2 $\pm$ 2.6 \\
Svm Theta                       & 83.8 $\pm$ 11.4     & 21.7 $\pm$ 5.2     & 52.3 $\pm$ 3.7     & 39.4 $\pm$ 5.1     & 72.1 $\pm$ 3.0     & 60.5 $\pm$ 0.9     & 62.2 $\pm$ 2.6     & 52.0 $\pm$ 0.1     & 64.5 $\pm$ 3.1 \\
Shortest Path                   & 80.5 $\pm$ 11.1     & 25.7 $\pm$ 4.8     & 58.4 $\pm$ 3.3     & 39.8 $\pm$ 2.9     & 72.0 $\pm$ 4.2     & 75.5 $\pm$ 2.2     & 61.2 $\pm$ 1.8     & 58.8 $\pm$ 1.4     & 75.8 $\pm$ 2.8 \\
Pyramid Match                   & 84.9 $\pm$ 11.0     & 26.2 $\pm$ 6.1     & 66.2 $\pm$ 5.2     & 46.2 $\pm$ 4.9     & 72.1 $\pm$ 3.2     & 76.3 $\pm$ 3.6     & 65.4 $\pm$ 2.1     & 68.0 $\pm$ 2.5     & 78.0 $\pm$ 4.3 \\
Graphlet Sampling               & 72.7 $\pm$ 10.8     & 15.8 $\pm$ 3.3     & 59.9 $\pm$ 3.8     & 41.2 $\pm$ 3.8     & 69.8 $\pm$ 3.6     & 64.5 $\pm$ 3.8     & 59.0 $\pm$ 2.0     & 62.6 $\pm$ 1.9     & 66.2 $\pm$ 2.2 \\ 
\midrule
GCN                             & 78.1 $\pm$ 12.1     & 24.2 $\pm$ 4.7     & 72.6 $\pm$ 4.5     & 47.9 $\pm$ 4.5     & 67.2 $\pm$ 3.0     & 66.8 $\pm$ 4.3     & 59.9 $\pm$ 1.6     & \b{80.6 $\pm$ 2.1} & 89.3 $\pm$ 3.3 \\ 
GIN-0                           & 69.6 $\pm$ 10.6     & 21.5 $\pm$ 8.7     & 72.8 $\pm$ 4.5     & 48.5 $\pm$ 5.4     & 69.8 $\pm$ 2.5     & 68.1 $\pm$ 4.0     & 66.8 $\pm$ 3.2     & 79.3 $\pm$ 2.7     & 89.6 $\pm$ 2.6 \\ 
GIN-$\epsilon$                  & 70.5 $\pm$ 11.9     & 23.5 $\pm$ 5.7     & 72.1 $\pm$ 5.1     & 47.7 $\pm$ 4.6     & 65.2 $\pm$ 2.7     & 67.8 $\pm$ 4.3     & 65.3 $\pm$ 3.6     & 79.8 $\pm$ 2.4     & 90.3 $\pm$ 3.0 \\ 
SAGE                            & 79.8 $\pm$ 13.9     & 24.2 $\pm$ 4.5     & 72.4 $\pm$ 3.6     & 49.9 $\pm$ 5.0     & 65.9 $\pm$ 2.7     & 65.8 $\pm$ 4.9     & 59.4 $\pm$ 2.4     & 79.7 $\pm$ 1.7     & 89.1 $\pm$ 1.9 \\ 
SAGE w/o JK                     & 80.4 $\pm$ 15.5     & 23.5 $\pm$ 3.8     & 72.1 $\pm$ 4.4     & 49.6 $\pm$ 4.7     & 66.0 $\pm$ 2.5     & 65.9 $\pm$ 2.9     & 59.2 $\pm$ 2.3     & 79.6 $\pm$ 2.4     & 87.9 $\pm$ 1.9 \\ 
Attention                       & 77.1 $\pm$ 11.0     & 23.2 $\pm$ 3.2     & 72.3 $\pm$ 3.8     & 48.6 $\pm$ 5.0     & 66.7 $\pm$ 4.0     & 65.7 $\pm$ 4.1     & 57.2 $\pm$ 2.6     & 79.6 $\pm$ 2.2     & 87.4 $\pm$ 2.5 \\ 
Graclus                         & 75.4 $\pm$ 11.6     & 23.7 $\pm$ 4.5     & 72.2 $\pm$ 4.2     & 48.5 $\pm$ 5.5     & 64.1 $\pm$ 2.0     & 66.6 $\pm$ 4.1     & 59.9 $\pm$ 2.3     & 79.6 $\pm$ 2.0     & 88.8 $\pm$ 3.2 \\ 
Set2Set                         & 76.6 $\pm$ 15.1     & 22.2 $\pm$ 1.8     & 72.2 $\pm$ 4.2     & 49.0 $\pm$ 4.3     & 66.0 $\pm$ 3.2     & 65.8 $\pm$ 4.0     & 59.5 $\pm$ 2.2     & 79.6 $\pm$ 2.3     & 89.6 $\pm$ 2.4 \\ 
\midrule
ours                            & 86.7 $\pm$ 7.6     & \b{42.0 $\pm$ 7.2} & \b{73.2 $\pm$ 4.9} & 48.5 $\pm$ 4.8     & \b{74.2 $\pm$ 3.8}  & \b{77.4 $\pm$ 6.4} & \b{79.8 $\pm$ 1.2} & 79.2 $\pm$ 1.6     & \b{92.2 $\pm$ 2.4} \\ 
ours (sum)                      & 86.7 $\pm$ 7.7     & 33.5 $\pm$ 7.4     & 72.8 $\pm$ 4.6     & 47.3 $\pm$ 4.8     & 72.9 $\pm$ 3.2      & 76.4 $\pm$ 5.0     & 74.4 $\pm$ 1.6     & 76.6 $\pm$ 2.4     & 91.0 $\pm$ 2.8 \\ 
ours (deterministic)            & \b{86.9 $\pm$ 5.4} & 36.0 $\pm$ 7.1     & 68.1 $\pm$ 3.4     & 39.4 $\pm$ 3.7     & 68.3 $\pm$ 2.6     & --        & 78.2 $\pm$ 1.6     & --       & --    \\ 
%
\bottomrule
\vspace{0mm}
\end{tabular}%
}
\caption{{ \small
\textbf{Test set classification accuracy (in percentage)}. We report the mean and standard deviation over a 10-fold stratified cross validation. Best results appear in bold. Datasets annotated with (*) by-themselves do not have attributes and the results were reported by~\citet{Fey/Lenssen/2019}. 
\vspace{-0.7cm}
}}
\label{tab:main-result}
\end{table}

\Section{Conclusion}

This paper focused on the discrimination capacity of aggregation functions. From a theoretical perspective, we argued that known limits of capacity are unlikely to be achievable in practice; we also proposed a more restricted notion of capacity that does not suffer from a dimensionality curse. %
Turning then to the practical problem of structural graph classification, we proposed a graph neural network aiming, not to maximize discrimination capacity, but to learn discriminative representations that generalize well. We pursued this by reducing the number of trainable parameters and incorporating implicit regularization by random sampling.      
Our empirical comparison suggests that these choices can yield benefits in practice.

\small
\bibliographystyle{unsrtnat}
\bibliography{bibliography}

\begin{appendices}
\section{Network architecture and hyperparameters}

\paragraph{Architecure.} We present a schematic illustration of our graph neural network architecture:

\begin{figure}[h!]
    \centering
    \includegraphics[width=1\columnwidth]{architecture.pdf}
\end{figure}

\paragraph{Hyper-parameters.} The hyper-parameters used with each dataset are:

\begin{tabular}{rccccccccc}
\toprule
                           & batch size      & samples &  order (1st/2nd layer)   & channels (1st/2nd layer)   & bins      \\ 
\midrule
MUTAG                      & 8      & 32        & (4,      4)  & (16,      24)    & 8  \\
ENZYMES                    & 20     & 32        & (3,      3)  & (16,      32)    & 5  \\
PROTEINS                   & 8      & 32        & (6,      6)  & (16,      32)    & 8  \\
DD                         & 20     & 50        & (3,      3)  & (16,      32)    & 8  \\
NCI1                       & 20     & 32        & (3,      3)  & (16,      32)    & 8  \\
IMDB-B                     & 20     & 32        & (3,      3)  & (16,      32)    & 8  \\
IMDB-M                     & 20     & 32        & (6,      6)  & (24,      32)    & 10 \\
COLLAB                     & 20     & 32        & (3,      3)  & (16,      32)    & 8  \\
REDDIT-B                   & 20     & 32        & (3,      3)  & (16,      32)    & 8  \\

\bottomrule
\vspace{0mm}
\end{tabular}

Above, \textit{samples} corresponds to the number of nodes for which embeddings are computed, and \textit{order} is the polynomial order of the convolution kernels in $g_\text{conv}$. We use the same order and number of \textit{channels} for both node and graph embedding networks. We use the Adam optimizer with a learning rate of $0.0003$ and run it for at least 600 epochs.

\subsection{Proof of Theorem 2.1}

Let the set of points $X = \{ \bx_1, \ldots, \bx_m\}$ be a $2 \delta$-packing of the domain $D$, such that for any $i\neq j \in [1,m]$ we have $\|\bx_i - \bx_j\|_p > 2 \delta$.
It is known that there exist such set with   
$
    m \geq  \left( \frac{1}{2\delta} \right)^d \frac{\text{vol}(D)}{\text{vol}(B_p(1))}
$
elements.
Further, let $F = \{ \cX = (X, \bc) : \bc \in \mathbb{N}^m \}$ be the family of multisets supported on $X$. 
Since $X$ is a $2\delta$-packing, every $\delta$-injective function $\f$ must yield a different output for every $\cX \in F$, also implying that the domain of $\f$ must be of size at least $|F| = \mathbb{N}^m$.  
The lower bound then follows by noticing that (\textit{i}) according to the pigeonhole principle, the co-domain of any injective function must be at least as large as its domain, and moreover that (\textit{ii}) the set of computable numbers $\mathbb{P}$ is countable (there exists a bijection between $\mathbb{N}$ and $\mathbb{P}$).

\section{Proof of Theorem 2.2} 

Let $P = \{\bp_1, \ldots, \bp_{m}\}$ be a set of $m$ points in $D \subset \bbR^d$ that form a $\delta$-cover of $D$. There exists such a set with $m = O(1/\delta^d)$ points (under mild conditions on $D$).

Further, denote by $C_i  = \{\bx \in D : \|\bx - \bp_i\|_p \leq \|\bx - \bp_j\|_p \; \text{for all}\; j \neq i\} \subset \bbR^d$ the $i$-th cell of the Voronoi tessellation induced by $P$. 

We define the Voronoi-sum aggregation function $\v : \cX \to \mathbb{N}_{\geq 0}^m$ as follows:  
$$[\v(\cX)]_i \defeq \sum_{\bx \in \cX} \indicate{\bx \in C_i},$$
where $\indicate{\cdot}$ is the indicator function.

This function can be seen to be $\delta$-injective:

\begin{lemma}
Each Voronoi-sum aggregation function $\v : \cX \to \mathbb{N}_{\geq 0}^m$ 
is $\delta$-injective, with $\delta = \sup_{i} \sup_{\bx, \by \in C_i} \|\bx - \by\|_p$ the largest diameter of any cell. 
\label{lemma:Voronoi_injective}
\end{lemma}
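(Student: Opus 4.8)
\medskip

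The plan is to verify the definition of $\delta$-injectivity directly: assuming $\v(\cX) = \v(\cY)$ for two $d$-multisets $\cX, \cY$, I must exhibit a bijection $\phi\colon \cX \to \cY$ with $\|\phi(\bx) - \by\|_p \le \delta$ for the matched pairs. The key observation is that the output coordinate $[\v(\cX)]_i$ records exactly how many elements of $\cX$ fall into Voronoi cell $C_i$, so $\v(\cX) = \v(\cY)$ means $\cX$ and $\cY$ have the same number of elements in each cell. Since the cells $C_1, \ldots, C_m$ partition $D$ (up to shared boundaries, which I will handle by fixing a tie-breaking rule so each point belongs to exactly one cell), this gives a cell-by-cell equality of counts.

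\medskip

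First I would make the tessellation a genuine partition: replace each $C_i$ by $C_i' = \{\bx \in D : i = \min\{ j : \|\bx - \bp_j\|_p \le \|\bx - \bp_k\|_p \ \forall k\}\}$, so that $D = \bigsqcup_i C_i'$ and the indicator sums in the definition of $\v$ are unambiguous; note $\operatorname{diam}_p(C_i') \le \operatorname{diam}_p(C_i) \le \delta$. Second, for each $i$ let $\cX_i$ (resp. $\cY_i$) be the sub-multiset of elements of $\cX$ (resp. $\cY$) lying in $C_i'$. Because $[\v(\cX)]_i = |\cX_i|$ and $[\v(\cY)]_i = |\cY_i|$, the hypothesis $\v(\cX) = \v(\cY)$ gives $|\cX_i| = |\cY_i|$ for every $i$, hence an arbitrary bijection $\phi_i \colon \cX_i \to \cY_i$ exists. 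Third, I assemble these into a global bijection $\phi = \bigsqcup_i \phi_i \colon \cX \to \cY$ (well-defined and bijective since the $\cX_i$'s partition $\cX$ and the $\cY_i$'s partition $\cY$). For any $\bx \in \cX_i$ we have $\bx, \phi_i(\bx) \in C_i'$, so $\|\phi(\bx) - \bx\|_p \le \operatorname{diam}_p(C_i') \le \delta$ by the definition of $\delta$ as the supremal cell diameter. This is precisely the required condition, so $\v$ is $\delta$-injective.

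\medskip

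I do not anticipate a serious obstacle here; the argument is a short counting/partition argument. The one point requiring care is the treatment of Voronoi boundaries — points equidistant from two or more centers — which must be assigned consistently so that the coordinates of $\v$ are well-defined and the cells genuinely partition $D$; the tie-breaking fix above resolves this and only shrinks the diameters, so the bound on $\delta$ is unaffected. A minor stylistic remark: the statement folds the definition of $\delta$ into the lemma, so strictly speaking this lemma \emph{defines} the discrimination precision of $\v$ to be the largest cell diameter rather than proving a bound for a pre-specified $\delta$; the proof above is exactly what is needed to justify that reading.
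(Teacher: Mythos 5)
Your proof is correct, but it takes a genuinely different and more direct route than the paper's. The paper argues by contradiction: it takes an arbitrary bijection $\phi$ violating the $\delta$-bound, encodes the mismatches as unit transfers of ``potential'' between cells, builds a multigraph whose edges record these transfers, invokes the existence of an Eulerian circuit (from the in-degree/out-degree balance implied by $\v(\cX)=\v(\cY)$), and then performs swaps along that circuit to repair $\phi$ into a cell-respecting bijection $\phi'$. You instead observe directly that $\v(\cX)=\v(\cY)$ means the cell-by-cell counts agree, partition each multiset by cell, choose arbitrary bijections $\phi_i\colon \cX_i \to \cY_i$ on each piece, and glue; the diameter bound is then immediate. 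Your argument is strictly simpler and avoids the delicate bookkeeping in the paper's version (the ordering of pairs, the without-loss-of-generality alignment with the Eulerian circuit, the iterated swaps), all of which is really just an indirect way of establishing the same combinatorial fact -- that equal cell counts yield a cell-respecting matching -- which your two-line counting argument gives for free. You also flag something the paper silently ignores: the cells $C_i$ as defined overlap on Voronoi boundaries, so without a tie-breaking rule the indicator sums could double-count boundary points and $\v$ would not be well-defined as stated; your fix (assigning each boundary point to the minimal-index nearest center) resolves this and only shrinks cell diameters, so the conclusion is unaffected. Your closing remark is also apt: the lemma effectively \emph{defines} the discrimination precision of $\v$ as the largest cell diameter, and your proof is exactly the justification of that reading.
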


\begin{proof}
For sake of contradiction, suppose that $\v$ is \textit{not} $\delta$-injective. 
Then there must exist $\cX$ and $\cY$ for which $\v(\cX) = \v(\cY)$, while for \textit{every} bijective map $\phi$ there exists some $\bx \in \cX$ and $\by \in \cY$ such that $\|\phi(\bx) - \by\|_p > \delta$. 

For starters,  suppose that the two multisets are identical except w.r.t. pair $(\bx,\by = \phi(\bx))$. By the construction of the Voronoi diagram, $\bx$ and $\by$ must belong to different cells $C_i$ and $C_j$, respectively. It follows that $[\v(\cX)]_i = [\v(\cY)]_i + 1$ and $[\v(\cX)]_j = [\v(\cY)]_j - 1$.

More generally, order pairs as $(\bx_t,\by_t = \phi(\bx_t))_{t=1}^n$ such that $\|\bx_t - \by_t\|_p > \delta$ for $t\leq k$ and $\|\bx_t - \by_t\|_p \leq \delta$, otherwise. For every $t \in [0,n]$, define the potential vector $\bz_t = \v(\{ \by_\tau\}_{\tau \leq t} ) - \v(\{ \bx_\tau\}_{\tau \leq t})$, with $\bz_0$ being the zero vector.  
Notice that, for every $0<t\leq k$, $\bz_t$ is obtained by moving a unit of potential between two entries of $\bz_{t-1}.$ To model this, we construct a graph with $m$ nodes, each corresponding to an entry of $\bz_t$. For each $t$, we add a directed edge from the node whose value is decreased to that whose value is increased. The relation $\v(\cX) = \v(\cY) \Leftrightarrow \bz_n = 0$ amounts to asserting that the in-degree and out-degree of every node are equal (i.e., the value of a node is increased as many times as it is decreased). The same condition however suffices to guarantee that there exists (at least) one Eulerian circuit, that is, a path starting from any node and returning to the same node by crossing each edge exactly once. W.l.o.g. let the order of the pairs $\cdots, (\bx_{t-1},\by_{t-1}), (\bx_t,\by_t), \cdots$ we have selected correspond exactly to the order of the edges in the Eulerian path.  
Then, there must exist (at least) one sequence of pairwise swaps between the points in consecutive pairs such that every entry of $\bz_t$ is zero---the necessary modification is obtained by iteratively swapping points in consecutive edges along the Eulerian circuit.
Thus, there exists a new mapping function $\phi'$ from $\cX$ to $\cY$ such that every $\bx \in \cX$ belongs to the same cell 
as $\by = \phi'(\bx)$, or equivalently $\max_{\bx \in \cX} \|\phi'(\bx) - \by\|_p \leq 
\delta$, a contradiction.
\end{proof}

To proceed, let $\cX_1, \ldots, \cX_t$ be $t$ multisets for which $\v(\cX_i) \neq \v(\cX_j)$ for all $i\neq j \in [1,t]$. 
Moreover, fix $S$ to be the minimal cardinality subset of $\{1, 2, \cdots, m\}$ such that $[\v(\cX_i)]_{S} \neq [\v(\cX_j)]_{S}$ for all $i\neq j \in [1,t]$. That is, we can still distinguish the outputs of $\v$ if we only know the entries in set $S$. The cardinality of $S$ is at most $t$ (since $t$ dimensions suffice to distinguish between any $t$ non-identical vectors).

If we fix $\textsf{sum}(\cX) = [\v(\cX)]_{S}$, we can always write
$$
\textsf{sum}(\cX_i) = \sum_{\bx \in \cX_i} \varphi(\bx) = [\v(\cX_i)]_{S} \neq [\v(\cX_j)]_{S} = \sum_{\bx \in \cX_j} \varphi(\bx) = \textsf{sum}(\cX_j) \quad \text{for all} \quad i\neq j \in [1,t].
$$
Specifically, $\varphi(\bx)$ has one entry for each element $l \in S$ and $[\varphi(\bx)]_l = \indicate{\bx \in C_{l}}$. The claim then follows since the preceding construction holds for every $\cX_1, \ldots, \cX_t$.

\section{Proof of Theorem 2.3}

To prove our claim, we will eventually need to identify functions $\varphi$ such that that $\p(\cX_i) \neq \p(\cX_j)$ for all $i\neq j$. 
But before doing so, we need to establish some preliminary constructions.

We construct a $\delta$-injective Voronoi-sum function $\v$ based on the point set $P^d = P \times P \times \cdots \times P = \{\bp_1, \cdots, \bp_{m_v}\}$ corresponding to a $d$-ary Cartesian power of $P$ and $m_v = b^d$. 
Under the $\ell_\infty$ norm, the cell $\{ \bx \in \bbR^d: \| \bx - \bp \|_\infty < \| \bx - \bp' \|_\infty ~ \forall \bp' \neq \bp\} $ associated with each point $\bp \in P^d$ is a hypercube $B_{\infty}(\bp,w) = \{ \bx \in \bbR^d : \| \bp - \bx\|_\infty \leq w\}$ of radius $w = \delta/2$ centered at $\bp \in \bbR^d$.

Now, we set $\by_i = \varphi(\bx_i) \in \bbR^d$ and, accordingly, define $\cY_i= \{ \varphi(\bx) : \bx \in \cX_i \}$. With this in place we can express the entries of $\p(\cX)$ in relation to those of $\h(\cY)$:
\begin{align*}
    [\p(\cX)]_{i,l}
    = \sum_{\bx \in \cX} \kappa(| [\varphi(\bx)]_i - p_l|)
    &= \sum_{\by \in \cY} \kappa(| [\by]_i - p_l|) \\
    &\overset{(A)}{=}  \sum_{\by \in \cY} \indicate{[\by]_i \in B_\infty(p_l,w) } \\
    &=  \sum_{\by \in \cY} \sum_{ [\bp_j]_i = p_l } \indicate{\by \in B_\infty(\bp_j,w) } \\    
    &=  \sum_{ [\bp_j]_i = p_l } \sum_{\by \in \cY} \indicate{\by \in B_\infty(\bp_j,w)  } 
    = \sum_{ [\bp_j]_i = p_l } [\h(\cY)]_{j}
\end{align*}
Above, equality $(A)$ holds because $\kappa$ is selected to be uniform (and equal to one) within $B_\infty(p_l,w)$.

Thus, there exists a matrix $\bA$ of size $m \times m_v $ (equiv. $ bd \times b^d$), such that 
$$
    \vec{\p(\cX)} = \bA \, \h(\cY).
$$
The behavior of this matrix is visualized in Figure~\ref{fig:hypercube}.

\begin{figure}[t]
    \centering
    \includegraphics[width=1\columnwidth]{hypercube.pdf}
    \caption{The first three figures depict the behavior of matrix $\bA$ for $k=3$ and $b = 6$. Each rectangle corresponds to a row of $\bA$ and sums the elements of $[\h(\cY)]_j$ for which $[\bp_j]_i = p_l$ with $l \in [1,b]$ and $i\in [1,d]$. For convenience, we distinguish the three dimensions by different colors (green, purple, blue). The rightmost figure shows the index set constructed in the proof of Lemma~\ref{lemma:support}}
    \label{fig:hypercube}
\end{figure}

We next derive sufficient conditions such that a vector is not in the nullspace of $\bA$.
\begin{lemma}
For any $S \subseteq \{1, \ldots, m_v\}$, let $D_S = \{ \bh \in \bbR^{m_v} : [\bh]_{i} = 0 ~~\text{for all} ~~  i \notin S\} $ be the set of all vectors whose support is a subset of $S$. There exists $S^*$ with $|S^*| = m$, such that $D_{S^*} \perp \nullspace{\bA}$.
\label{lemma:support}
\end{lemma}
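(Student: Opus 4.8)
The plan is to exhibit an explicit index set $S^*$ of size $m = bd$ such that the rows of $\bA$ restricted to the coordinates in $S^*$ form an invertible $m \times m$ matrix; this immediately gives $D_{S^*} \perp \nullspace{\bA}$, since a vector supported on $S^*$ that is killed by $\bA$ is killed by the restricted square submatrix, which is injective. Recall from the construction preceding the lemma that each row of $\bA$ is indexed by a pair $(i,l)$ with $i \in [1,d]$ and $l \in [1,b]$, and has a one in column $j$ exactly when $[\bp_j]_i = p_l$, i.e.\ when the $i$-th coordinate of the grid point $\bp_j \in P^d$ equals the $l$-th bin center. So the $(i,l)$-th row is the indicator of the ``slab'' $\{\bp_j : [\bp_j]_i = p_l\}$ of the grid $P^d = P \times \cdots \times P$.

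The key step is to choose one grid point per row so that the resulting submatrix is (after reordering) triangular. Concretely, I would pick, for the row indexed by $(i,l)$, the grid point $\bp$ whose $i$-th coordinate is $p_l$ and whose every other coordinate is fixed to $p_1$ (the first bin center) — except we must take a little care so the $m$ chosen points are distinct and the incidence pattern is unidirectional. One clean way: enumerate the rows in the order $(1,1),(1,2),\ldots,(1,b),(2,2),(2,3),\ldots,(2,b),\ldots,(d,2),\ldots,(d,b)$, i.e.\ for $i=1$ we take all $b$ values of $l$, and for $i\geq 2$ we take $l = 2,\ldots,b$; this is $b + (d-1)(b-1) = bd - (d-1) = m - (d-1)$ rows, which is short of $m$, so instead I keep all $l \in [1,b]$ for every $i$ and handle the resulting redundancy. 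Actually the cleanest correct choice is: for row $(i,l)$ select the point $\bp^{(i,l)}$ with $[\bp^{(i,l)}]_i = p_l$ and $[\bp^{(i,l)}]_{i'} = p_1$ for all $i' \neq i$. These $bd$ points are not all distinct (the point ``all coordinates $p_1$'' is chosen by $(i,1)$ for every $i$), so we take $S^*$ to be the set of \emph{distinct} such points together with enough extra grid points to reach cardinality $m$; one then argues that the $m \times m$ submatrix of $\bA$ on rows $[1,m]$ and columns $S^*$ has, up to permutation, a block / triangular structure that makes it invertible. The rightmost panel of Figure~\ref{fig:hypercube} is exactly the picture of this index set, so I would follow that construction.

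I expect the main obstacle to be the bookkeeping that makes the chosen submatrix provably full rank: because the slab indicators overlap (each grid point lies in exactly $d$ slabs, one per coordinate direction), the naive ``one point per row'' choice does not give an identity matrix, and one has to order rows and columns so that the off-diagonal incidences all lie strictly on one side. The way to do this is to observe that a grid point with exactly $r$ coordinates different from $p_1$ is incident to exactly $r$ of the ``interesting'' rows (those with $l \neq 1$) plus $d - r$ of the rows with $l = 1$; sorting columns by increasing $r$ and rows appropriately yields a block lower-triangular matrix whose diagonal blocks are themselves invertible (the $l=1$ block being, e.g., an all-ones-type pattern that one checks directly has rank $d$ once the single degenerate column is included). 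Once triangularity is in place, invertibility — hence $D_{S^*}\perp\nullspace{\bA}$ and $|S^*| = m$ — follows. A lighter alternative, if the triangular argument gets unwieldy, is a dimension count: show $\mathrm{rank}(\bA) = m$ directly (the rows are linearly independent because no nontrivial signed combination of distinct slab indicators over a product grid can vanish, by looking at a grid point lying in exactly one slab with nonzero coefficient), and then take $S^*$ to be any set of $m$ columns spanning the row space; but the explicit $S^*$ from Figure~\ref{fig:hypercube} is more in keeping with the rest of the proof and is what I would write up.
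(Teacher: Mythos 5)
Your high-level plan is the same as the paper's: pick an explicit set $S^*$ of $m$ grid points and show that the $m$ columns of $\bA$ indexed by $S^*$ are linearly independent (the paper phrases this as recovering each entry $[\bh]_j$, $j \in S^*$, from $\bA\bh$, which is the same assertion, and its Figure~\ref{fig:hypercube} construction is the one you defer to). So there is no methodological divergence to report. The problem is that your writeup never closes the argument --- you explicitly flag the triangularization bookkeeping as ``the main obstacle'' and leave it at ``one then argues'' --- and the one concrete claim you do make, in the fallback route, is false in a way that shows the route cannot be closed at all. You assert $\mathrm{rank}(\bA) = m$ because ``no nontrivial signed combination of distinct slab indicators over a product grid can vanish.'' Write $\chi_{i,l}$ for the row of $\bA$ indexed by $(i,l)$, i.e., the indicator of the slab $\{\bp \in P^d : [\bp]_i = p_l\}$. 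Every grid point lies in exactly one slab per coordinate direction, so $\sum_{l=1}^{b}\chi_{i,l}$ is the all-ones vector for every $i$, giving $d-1$ independent relations $\sum_l \chi_{1,l} - \sum_l \chi_{i,l} = 0$. In fact the row space is exactly the space of additively separable functions $\bp \mapsto \sum_i g_i([\bp]_i)$ on the grid, so $\mathrm{rank}(\bA) = bd - (d-1)$. Hence for $d \ge 2$ no $m \times m$ column submatrix is invertible, and for \emph{any} $S^*$ with $|S^*| = m$ one has $\dim\left(D_{S^*} \cap \nullspace{\bA}\right) \ge m - \mathrm{rank}(\bA) = d - 1 > 0$. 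The smallest case already witnesses this: for $d = b = 2$ the vector assigning $+1$ to the cells of $(p_1,p_1)$ and $(p_2,p_2)$ and $-1$ to those of $(p_1,p_2)$ and $(p_2,p_1)$ lies in $\nullspace{\bA}$ and is supported on all four columns. So the gap is not bookkeeping; the target of your construction does not exist.

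In fairness, the same obstruction defeats the paper's own proof: its ``boundary'' step claims the $2d$ equations involving the $2d$ extreme points of $S^*$ determine those entries, but that $2d \times 2d$ system is singular (its kernel is exactly the $(+,-,-,+)$ corner pattern above), so the lemma as stated is false for $d \ge 2$ under either reading of ``$D_{S^*} \perp \nullspace{\bA}$.'' What survives is a weakened version with $|S^*| = bd - d + 1 = \mathrm{rank}(\bA)$, obtained by your own suggestion of taking $S^*$ to index a maximal set of linearly independent columns; this degrades the guarantee of Theorem~\ref{prop:proj-hist-mlp} from $t \le bd$ to $t \le bd - d + 1$ but preserves its qualitative content. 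If you rewrite along those lines, prove the column independence for the reduced $S^*$ explicitly rather than via the triangularity heuristic, since the slab indicators overlap in exactly the way that breaks naive triangular orderings.
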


\begin{proof}
Consider any vector $\bh \in D_{S^*}$. Since each entry of $[\bh]_j$ corresponds to a point $\bp_j \in P^d$, we talk about points and entries interchangeably---e.g., we will define $S^*$ as a subset of $P^d$.  

Specifically, we define $S^*$ by the following recurrence relation:
$$
    S_1 = \{ \bp \in P^d : [\bp]_j = 1, ~ \forall j \neq 1  \} 
$$
and 
$$
    S_i = S_{i-1} \cup \{ \bp \in P^d : [\bp]_j = 1, ~ \forall j \neq i \} \cup \{ \bp \in P^d : [\bp]_i = -1, [\bp]_{i-1} = -1,  [\bp]_j = 1, ~ \forall j \neq {i-1},i \},
$$
for $i = 2, \ldots, d$. 
Since exactly $b$ points are added for every iteration $i$, $S^* = S_k$ contains $db = m$ points.

The construction can be seen in Figure~\ref{fig:hypercube} (right) for $d=3$. Specifically, the green cubes correspond to the points in $S_1$, the purple to the points $S_2 \setminus S_1$, and finally the blue to the points $S_3 \setminus S_2$.

Now, consider the $j$-th entry $[\bh]_j$ of any $\bh \in D_{S^*}$. We distinguish two cases: 
(\textit{interior}) If $\bp_j$ lies in the interior of the hypercube (it has no coordinate equal to $\pm$1), then by construction there exists a row of $\bA$ with support intersects with $S^*$ exactly at $[\bh]_j$. (The corresponding equation corresponds to a sum over all entries with $i$-th coordinate equal to $[\bp_j]_i$). In other words, $[\bh]_j$ can be uniquely determined from $\bA \bh$. 
(\textit{boundary}) We move on to points in the boundary of the hypercube. There are $2k$ such points (two introduced by each iteration of the recurrence relation). W.l.o.g. suppose that all entries in the interior of the hypercube are zero (otherwise, non-zero interior entries can be eliminated by the aforementioned argument). Then, in the rows of $\bA$ one may find $2k$ linear equations involving solely boundary points. Thus, also in this case $[\bh]_j$ can be uniquely determined from $\bA \bh$, which concludes the proof.
\end{proof}

Moreover, let $S$ be the minimal cardinality subset of $\{1, 2, \cdots, m_v\}$ such that $[\h(\cX_i)]_{S} \neq [\h(\cX_j)]_{S}$. That is, we can still distinguish the vectors if we only know their entries in set $S$. The cardinality of this set is lower bounded by $|S| \geq t$ (since $t$ dimensions suffice to distinguish between $t$ vectors). 
Since $\h$ tessellates the input domain into $m_v$ cells $B_j \defeq B_{\infty}(\bp_j,w)$, each associated with a different point $\bp_j \in P^d$, restricting our attention to the entries in $S$ amounts to discarding from the multisets every element that does not fall into $B_j$ with $j \in S$. 
 
On the other hand, from Lemma~\ref{lemma:support}, we know that there exists a set $S^*\subset \{1, 2, \cdots, m_v\}$ of cardinality $m$ such that any vector in $\bh \in D_{S^*} \notin \nullspace{\bA}$. 
As long as $|S| \leq |S^*|$ (or equiv. $t \leq m$), we can always select $\varphi$ to be an injective map from $\{ B_j : j \in S\}$ to $\{B_{j'} : j' \in S^*\}$. This means that, for any $j \in S$ and $\bx \in B_j$, there exists a single $j' \in S^*$ such that $\varphi(\bx) \in B_{j'}$. We notice that there are exactly $m!=m (m-1) \cdots 1$ different ways to map $\{ B_j : j \in S\}$ onto $\{B_{j'} : j' \in S^*\}$ injectively.

Under this transformation, if $[\h(\cX_i)]_{S} \neq [\h(\cX_j)]_{S}$ then also $[\h(\cY_i)]_{S^*} \neq [\h(\cY_j)]_{S^*}$. Finally, assuming w.l.o.g. that $[\h(\cY_j)]_{j} = 0$ for all $j \notin S^*$ (we can always select $\varphi$ to achieve this), the aforementioned condition suffices to guarantee that also 
$$\vec{\p(\cX_i)} = \bA \h(\cY_i) \neq \bA \h(\cY_j) = \vec{\p(\cX_i)},$$
as desired.
\end{appendices}

\end{document}